\documentclass{article}
\usepackage[margin = 1in]{geometry}
\usepackage{natbib}
\RequirePackage[OT1]{fontenc}
\RequirePackage{amsthm,amsmath}
\RequirePackage[colorlinks,citecolor=blue,urlcolor=blue]{hyperref}
\usepackage{graphicx, amssymb, mathrsfs,amsfonts,url, bm}
\usepackage{hyperref, cleveref, autonum}
\usepackage{color, soul}
\usepackage{enumerate}
\usepackage{multirow}
\usepackage{rotating} 
\usepackage{subfigure} 
\usepackage{cancel}

\usepackage{changes}
\usepackage{booktabs}
\usepackage{graphicx} 
\usepackage{xparse}
\usepackage[boxed]{algorithm2e}

\usepackage[T1]{fontenc}
\usepackage[font=small,labelfont=bf,tableposition=top]{caption}
\DeclareCaptionLabelFormat{andtable}{#1~#2  \&  \tablename~\thetable}

\theoremstyle{plain}
\theoremstyle{remark}         
\theoremstyle{definition} 

\NewDocumentCommand{\emphbf}{O{G}}{\emph{\textbf{#1}}}

\usepackage{amsmath}
\usepackage{amsthm} 
\usepackage{graphicx, amssymb, mathrsfs,amsfonts,url, bm}
\newtheorem{theorem}{Theorem}[section]


\newcommand{\balpha}{\boldsymbol\alpha}
\newcommand{\bbeta}{\boldsymbol\beta}

\newcommand{\bz}{\bm{z}}
\newcommand{\mathcalX}{\mathcal{X}}
\newcommand{\bpi}{\bm{\pi}}

\newcommand{\bxi}{\mathbf{x}_i}
\newcommand{\xknoi}{\mathcal{X}_{k,-i}}

\newcommand{\bznoi}{\mathbf{z}_{-i}}

\newcommand{\bmu}{\boldsymbol\mu}
\newcommand{\bsigma}{\boldsymbol\Sigma}

\newcommand{\gammadist}{\mathrm{Gamma}}


\author{Jun Lu\\
Computer Science, EPFL, Lausanne \\
\texttt{jun.lu.locky@gmail.com} \\
}

\title{Hyperprior on symmetric Dirichlet distribution}

\begin{document}
\maketitle 

\abstract{In this article we introduce how to put vague hyperprior on Dirichlet distribution, and we update the parameter of it by adaptive rejection sampling (ARS). Finally we analyze this hyperprior in an over-fitted mixture model by some synthetic experiments.}

\section{Introduction}
It has become popular to use over-fitted mixture models in which number of cluster $K$ is chosen as a conservative upper bound on the number of components under the expectation that only relatively few of the components $K^\prime$ will be occupied by data points in the samples $\mathcalX$. This kind of over-fitted mixture models has been successfully due to the ease in computation. 

Previously \citet{rousseau2011asymptotic} proved that quite generally, the posterior behaviour of overfitted mixtures depends on the chosen prior on the weights, and on the number of free parameters in the emission distributions (here $D$, i.e. the dimension of data). Specifically, they have proved that (a) If $\underline{\alpha}$=min$(\alpha_k, k \leq K)$>$D/2$ and if the number of components is larger than it should be, asymptotically two or more components in an overfitted mixture model will tend to merge with non-negligible weights. (b) In contrast, if $\overline{\alpha}$=max$(\alpha_k, k \leqslant K)<D/2$, the extra components are emptied at a rate of $N^{-1/2}$. Hence, if none of the components are small, it implies that $K$ is probably not larger than $K_0$. In the intermediate case, if min$(\alpha_k, k \leq K)\leq D/2 \leq$ max$(\alpha_k, k \leqslant K)$, then the situation varies depending on the $\alpha_k$'s and on the difference between $K$ and $K_0$. In particular, in the case where all $\alpha_k$'s are equal to $D/2$, then although the author does not prove definite result, they conjecture that the posterior distribution does not have a stable limit. 

\section{Hyperprior on symmetric Dirichlet distribution}\label{sec:hyper_fbgmm_background}

\begin{figure}[h!]
\centering
  \includegraphics[width=5cm]{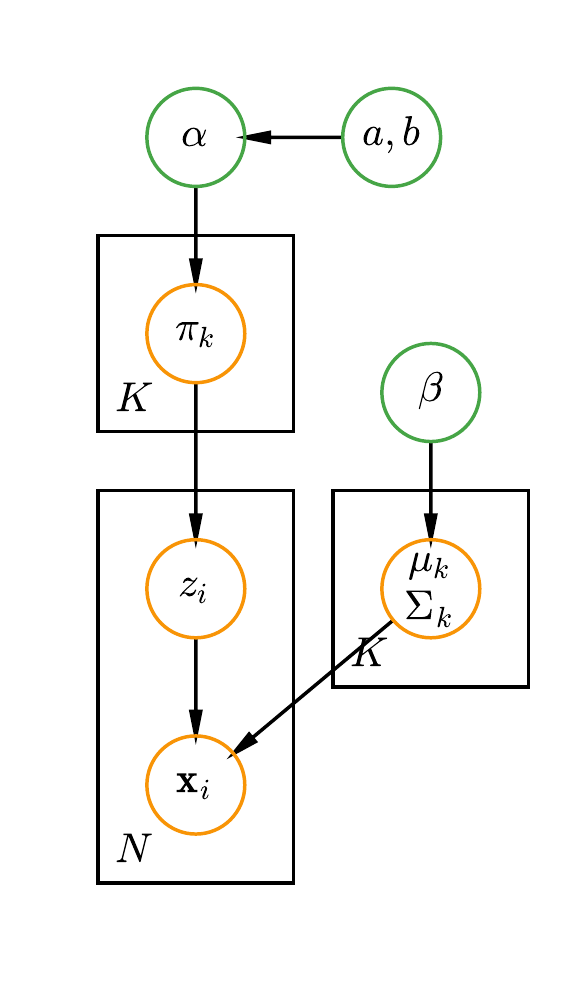}
  \caption{A Bayesian finite GMM with hyperprior on concentration parameter.}
  \label{fig:gmm_finite_with_hyper_second_one_same_with_first_one}
\end{figure}

Inspired by \citet{rasmussen1999infinite} and further discussed by \citet{gorur2010dirichlet}, they introduced a hyperprior on symmetric Dirichlet distribution prior. We here put a vague prior of Gamma shape on the concentration parameter $\alpha$ and use the standard $\balpha$ setting where $\alpha_k = \alpha = \alpha_+/K$ for $k=1, \ldots, K$. 

\begin{equation}
\alpha | a, b \sim \gammadist(a, b) \Longrightarrow p(\alpha | a,b) \propto \alpha^{a-1} e^{-b\alpha}. 
\end{equation}

To get the conditioned posterior distributions on $\alpha$ we need to derive the conditioned posterior distributions on all the other parameters,. But for a graphical model, this conditional distribution is a function only of the nodes in the Markov blanket. In our case, the Bayesian finite Gaussian mixture model, a Directed acyclic graphical (DAG) model, the Markov blanket includes the parents, the children, and the co-parents, as shown in Figure \ref{fig:gmm_finite_with_hyper_second_one_same_with_first_one}. From this graphical representation, we can find the Markov blanket for each parameter in the model, and then figure out their conditional posterior distribution to be derived:
\begin{equation}
\begin{aligned}
p(\alpha | \bpi, a, b) &\propto p(\alpha | a,b) p(\bpi | \alpha) \\
					&\propto \alpha^{a-1} e^{-b\alpha} \frac{\Gamma(K \alpha)}{\prod_{k=1}^K \Gamma(\alpha)} \prod_{k=1}^K \pi_k^{\alpha-1} \\
					&= \alpha^{a-1} e^{-b\alpha} (\pi_1 \ldots \pi_K)^{\alpha-1} \frac{\Gamma(K\alpha)}{[\Gamma(\alpha)]^K}. 
\end{aligned}
\end{equation}

The following two theorems give the proof that the conditional posterior distribution of $\alpha$ is log-concave.

\begin{theorem}
Define 
\begin{equation}
G(x) = \frac{\Gamma(K x)}{[\Gamma(x)]^K}. 
\label{equation:gx_log_concave2}
\end{equation}

For $x > 0$ and an arbitrary positive integer $K$,  the function G is strictly log-concave. 
\end{theorem}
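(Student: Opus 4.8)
The plan is to reduce strict log-concavity of $G$ to a single pointwise inequality for the trigamma function, and then prove that inequality by comparing the standard series term by term.

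\textbf{Step 1: Reduce to a second-derivative bound.} Since $\Gamma$ is smooth and strictly positive on $(0,\infty)$, so is $G$, and it suffices to show $(\log G)'' < 0$ on $(0,\infty)$. Writing $\log G(x) = \log\Gamma(Kx) - K\log\Gamma(x)$ and differentiating twice gives $(\log G)''(x) = K^2\psi'(Kx) - K\psi'(x)$, where $\psi'$ denotes the trigamma function. Thus the claim is equivalent to $K\psi'(Kx) < \psi'(x)$ for every $x>0$.

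\textbf{Step 2: Expand via the trigamma series and compare residue classes mod $K$.} Using $\psi'(y) = \sum_{n\ge 0}(y+n)^{-2}$ for $y>0$ and writing each nonnegative integer uniquely as $n = Km+j$ with $m\ge 0$, $0\le j\le K-1$, one obtains
\[
K^2\psi'(Kx) = \sum_{m\ge 0}\sum_{j=0}^{K-1}\Big(x+m+\tfrac{j}{K}\Big)^{-2},\qquad K\psi'(x) = \sum_{m\ge 0}\sum_{j=0}^{K-1}(x+m)^{-2},
\]
so that $(\log G)''(x) = \sum_{m\ge 0}\sum_{j=0}^{K-1}\big[(x+m+j/K)^{-2} - (x+m)^{-2}\big]$. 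Every summand is $\le 0$ because $x+m+j/K \ge x+m>0$, and for $K\ge 2$ the terms with $j\ge 1$ are strictly negative; hence $(\log G)'' < 0$, which proves strict log-concavity.

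\textbf{Expected obstacles and remarks.} The argument is short, so the only things needing care are routine justifications: that term-by-term differentiation of the $\log\Gamma$/digamma series is legitimate (locally uniform convergence), that the absolutely convergent trigamma series may be rearranged into residue classes modulo $K$, and the degenerate case $K=1$, where $G\equiv 1$ is log-affine but not strictly log-concave, so the statement should be read with $K\ge 2$ (or $K=1$ dismissed as trivial). If one prefers to avoid series manipulations, an alternative uses the integral representation $\psi'(y) = \int_0^\infty \frac{t e^{-yt}}{1-e^{-t}}\,dt$: substituting $t\mapsto t/K$ in the term $K^2\psi'(Kx)$ reduces the claim to $K(1-e^{-u}) > 1-e^{-Ku}$ for $u>0$, which holds since both sides vanish at $u=0$ and the derivative of the difference equals $K(e^{-u}-e^{-Ku})>0$ for $u>0$ when $K\ge 2$. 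I would present the series proof as the main one and mention the integral version as a remark.
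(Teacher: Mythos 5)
Your proof is correct and essentially coincides with the paper's: your residue-class rearrangement of the trigamma series, giving $K^2\Psi'(Kx)=\sum_{j=0}^{K-1}\Psi'(x+j/K)$, is exactly the twice-differentiated Gauss multiplication formula that the paper invokes, and both arguments conclude from the same termwise comparison $\Psi'(x+j/K)\le \Psi'(x)$ with strictness for $j\ge 1$. Your caveat about $K=1$ is a genuine (if minor) correction to the statement as written, since then $G\equiv 1$ and $(\log G)''\equiv 0$, so strict log-concavity fails there in the paper's proof as well.
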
\label{theorem:gx_log_concave2}

\begin{proof}

From \citet{abramowitz1966handbook} we get $\Gamma(Kx) = (2\pi)^{\frac{1}{2}(1-K)} K^{K x-\frac{1}{2} } \prod_{i=0}^{K-1}\Gamma(x + \frac{i}{K})$.  Then

\begin{equation}
\log G(x) = \frac{1}{2}(1-K) \log(2\pi) + (Kx - \frac{1}{2})\log K + \sum_{i=0}^{K-1} \log \Gamma(x + \frac{i}{K}) - K \log \Gamma(x)
\end{equation}
and
\begin{equation}
[\log G(x)]^\prime =K \log K + \sum_{i=0}^{K-1} \Psi(x + \frac{i}{K}) - K \Psi(x),
\end{equation}
where $\Psi(x)$ is the Digamma function, and

\begin{equation}
\Psi^\prime(x)  = \sum_{h=0}^\infty \frac{1}{(x+h)^2}. 
\label{equation:fbgmm_hyperprior_digamma_derivative2}
\end{equation}

Thus
\begin{equation}
[\log G(x)]^{\prime \prime} = \left[\sum_{i=0}^{K-1} \Psi^\prime(x + \frac{i}{K}) \right]- K \Psi^\prime(x) < 0, \quad (x>0). 
\end{equation}
The last inequality comes from (\ref{equation:fbgmm_hyperprior_digamma_derivative2}) and concludes the theorem. 

\end{proof}

This theorem is a general case of Theorem 1 in \citet{merkle1997log}. 

\begin{theorem}
In $p(\alpha | \bpi, a, b)$, when $a \geq 1$, $p(\alpha | \bpi, a, b)$ is log-concave
\end{theorem}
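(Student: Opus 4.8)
The plan is to reduce the claim to Theorem 1 (strict log-concavity of $G$) by direct differentiation of $\log p(\alpha \mid \bpi, a, b)$. First I would write out the log-posterior explicitly from the displayed expression for $p(\alpha \mid \bpi, a, b)$:
\begin{equation}
\log p(\alpha \mid \bpi, a, b) = (a-1)\log\alpha - b\alpha + (\alpha - 1)\sum_{k=1}^{K}\log\pi_k + \log G(\alpha) + \text{const},
\end{equation}
where the constant collects terms not depending on $\alpha$, and $G(x) = \Gamma(Kx)/[\Gamma(x)]^K$ as in \eqref{equation:gx_log_concave2}.

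Next I would differentiate twice with respect to $\alpha$. The term $-b\alpha$ is linear and the term $(\alpha-1)\sum_k \log\pi_k$ is linear in $\alpha$ (the $\pi_k$ are fixed conditioning values), so both contribute zero to the second derivative. This leaves
\begin{equation}
[\log p(\alpha \mid \bpi, a, b)]^{\prime\prime} = -\frac{a-1}{\alpha^2} + [\log G(\alpha)]^{\prime\prime}.
\end{equation}
Then I would invoke Theorem 1, which gives $[\log G(\alpha)]^{\prime\prime} < 0$ for all $\alpha > 0$, and observe that $-(a-1)/\alpha^2 \le 0$ precisely when $a \ge 1$. Summing the two terms yields a strictly negative second derivative on $(0,\infty)$, which is the definition of (strict) log-concavity.

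There is essentially no hard step here: the only thing to be careful about is the role of the hypothesis $a \ge 1$, which is exactly what is needed to keep the contribution of the $\alpha^{a-1}$ factor from being convex; for $a < 1$ that term is log-convex and the argument breaks, so the bound $a \ge 1$ cannot be dropped by this method. I would also note in passing that log-concavity of the full conditional is what justifies sampling $\alpha$ via adaptive rejection sampling (ARS), tying the result back to the stated purpose of the section.
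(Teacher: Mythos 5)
Your proof is correct and follows essentially the same route as the paper: both split $\log p(\alpha\mid\bpi,a,b)$ into the $\alpha^{a-1}e^{-b\alpha}(\pi_1\cdots\pi_K)^{\alpha-1}$ part (concave in $\log$ exactly when $a\ge 1$) and the $\log G(\alpha)$ part handled by Theorem 1. The only difference is presentational --- the paper invokes closure of log-concavity under products where you verify the same decomposition by explicit second differentiation --- and your version has the minor merit of making the ``easy to verify'' step and the necessity of $a\ge 1$ explicit.
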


\begin{proof}
It is easy to verify that $ \alpha^{a-1} e^{-b\alpha} (\pi_1 \ldots \pi_K)^{\alpha-1} $ is log-concave when $a \geq 1$. In view of that the product of two log-concave functions is log-concave and Theorem \ref{theorem:gx_log_concave2}, it follows that $\frac{\Gamma(K\alpha)}{[\Gamma(\alpha)]^K}$ is log-concave. This concludes the proof.
\end{proof}

From the two theorems above, we can find the conditional posterior for $\alpha$ depends only on the weight of each cluster. The distribution $p(\alpha | \bpi, a, b)$ is log-concave, so we may efficiently generate independent samples from this distribution using Adaptive Rejection Sampling (ARS) technique \citep{gilks1992adaptive}.

Although the proposed hyperprior on Dirichlet distribution prior for mixture model  is generic, we focus on its application in Gaussian mixture models for concreteness. We develop a collapsed Gibbs sampling algorithm based on \cite{neal2000markov} for posterior computation.

Let $\mathcalX$ be the observations, assumed to follow a mixture of multivariate Gaussian distributions. We use a conjugate Normal-Inverse-Wishart (NIW) prior $p(\bmu, \bsigma | \bbeta)$ for the mean vector $\bmu$ and covariance matrix $\bsigma$ in each multivariate Gaussian component, where $\bbeta$ consists of all the hyperparameters in NIW. A key quantity in a collapsed Gibbs sampler is the probability of each customer $i$ sitting with table $k$: $p(z_i = k | \bznoi, \mathcalX, \alpha, \bbeta)$, where $\bznoi$ are the seating assignments of all the other customers and $\alpha$ is the concentration parameter in Dirichlet distribution. This probability is calculated as follows:
\begin{equation} 
\begin{aligned}
p(z_i = k| \bznoi, \mathcal{X} , \alpha, \bbeta)  & \varpropto p(z_i = k | \bznoi, \alpha, \cancel{\bbeta})  p(\mathcal{X} |z_i = k, \bznoi, \cancel{\alpha}, \bbeta) \\
& = p(z_i = k| \bznoi, \alpha) p(\bxi |\mathcal{X}_{-i}, z_i = k, \bznoi, \bbeta) p(\mathcal{X}_{-i} |\cancel{z_i = k}, \bznoi, \bbeta)\\
& \varpropto p(z_i = k| \bznoi, \alpha) p(\bxi|\mathcal{X}_{-i}, z_i = k, \bznoi, \bbeta) \\
& \varpropto p(z_i = k| \bznoi, \alpha) p(\bxi | \xknoi, \bbeta), 										
\end{aligned}
\label{equation:sdir_fmm_collabsed_gibbs}
\end{equation}  
where $\xknoi$ are the observations in table $k$ excluding the $i^{th}$ observation. Algorithm~\ref{algo:sdir_fmm_plain_gibbs} gives the pseudo code of the collapsed Gibbs sampler to implement hyperprior for Dirichlet distribution prior in Gaussian mixture models. Note that ARS may require even 10-20 times the computational effort per iteration over sampling once from a gamma density and there is the issue of mixing being worse if we don’t marginalize out the $\pi$ in updating $\alpha$.  So this might have a very large impact on effective sample size (ESS) of the Markov chain. Hence, marginalizing out $\pi$ and using an approximation to the conditional distribution (perhaps with correction through an accept/reject step via usual Metropolis-Hastings or even just using importance weighting without the accept/reject) or even just a Metropolis-Hastings normal random walk for $\log(\alpha)$ may be much more efficient than ARS in practice. We here only introduce the updating by ARS.

\IncMargin{1em}
\begin{algorithm}
\SetKwData{Left}{left}\SetKwData{This}{this}\SetKwData{Up}{up}
\SetKwFunction{Union}{Union}\SetKwFunction{FindCompress}{FindCompress}
\SetKwInOut{Input}{input}\SetKwInOut{Output}{output}
\Input{Choose an initial $\bz$, $\alpha$ and $\bbeta$;}
\BlankLine

\For{$T$ iterations}{
\For{$i \leftarrow 1$ \KwTo $N$}{
	Remove $\bxi$'s statistics from component $z_i$ \;
	\For{$k\leftarrow 1$ \KwTo $K$}{
		Calculate $p(z_i=k| \bznoi, \alpha)$ \;
		Calculate $p(\bxi | \xknoi, \bbeta)$\;
		Calculate $p(z_i = k | \bznoi, \mathcal{X}, \alpha, \bbeta) \propto p(z_i=k| \bznoi, \alpha) p(\bxi | \xknoi, \bbeta)$\;
	}
	Sample $k_{new}$ from $p(z_i | \bznoi, \mathcalX, \alpha, \bbeta)$ after normalizing\;
	Add $\bxi$'s statistics to the component $z_i=k_{new}$ \;
}
$\star$ Draw current weight variable $\bpi = \{\pi_1, \pi_2, \ldots, \pi_K\}$ \;
$\star$ Update $\alpha$ using ARS\;
}
\caption{Collapsed Gibbs sampler for a finite Gaussian mixture model with hyperprior on Dirichlet distribution}\label{algo:sdir_fmm_plain_gibbs}
\end{algorithm}\DecMargin{1em}

\section{Experiments}
In the following experiments we evaluate the effect of a hyperprior on symmetric Dirichlet prior in finite Bayesian mixture model.

\subsection{Synthetic simulation}
The parameters of the simulations are as follows, where $K_0$ is the true cluster number. And we use $K$ to indicate the cluster number we used in the test:

Sim 1: $K_0=3$, with $N$=300, $\bm{\pi}$=\{0.5, 0.3, 0.2\}, $\bm{\mu}$=\{-5, 0, 5\} and $\bm{\sigma}$=\{1, 1, 1\};

In the test we put $\alpha \sim \gammadist(1, 1)$ as the hyperprior. Figure~\ref{fig:hyperdirichlet_sim1_traceplot_and_table} shows the result on Sim 1 with different set of $K$. Figure~\ref{fig:hyperdirichlet_sim1_alpha_several_k} shows the posterior density of $\alpha$ in each set of $K$. We can find that the larger $K-K_0$, the smaller the poserior mean of $\alpha$. This is what we expect, as the larger overfitting, the smaller $\alpha$ will shrink the weight vector in the edge of a probability simplex.

\begin{figure}[!ht]
    \centering
    \includegraphics[height = 4.4cm, width = 0.32\textwidth]{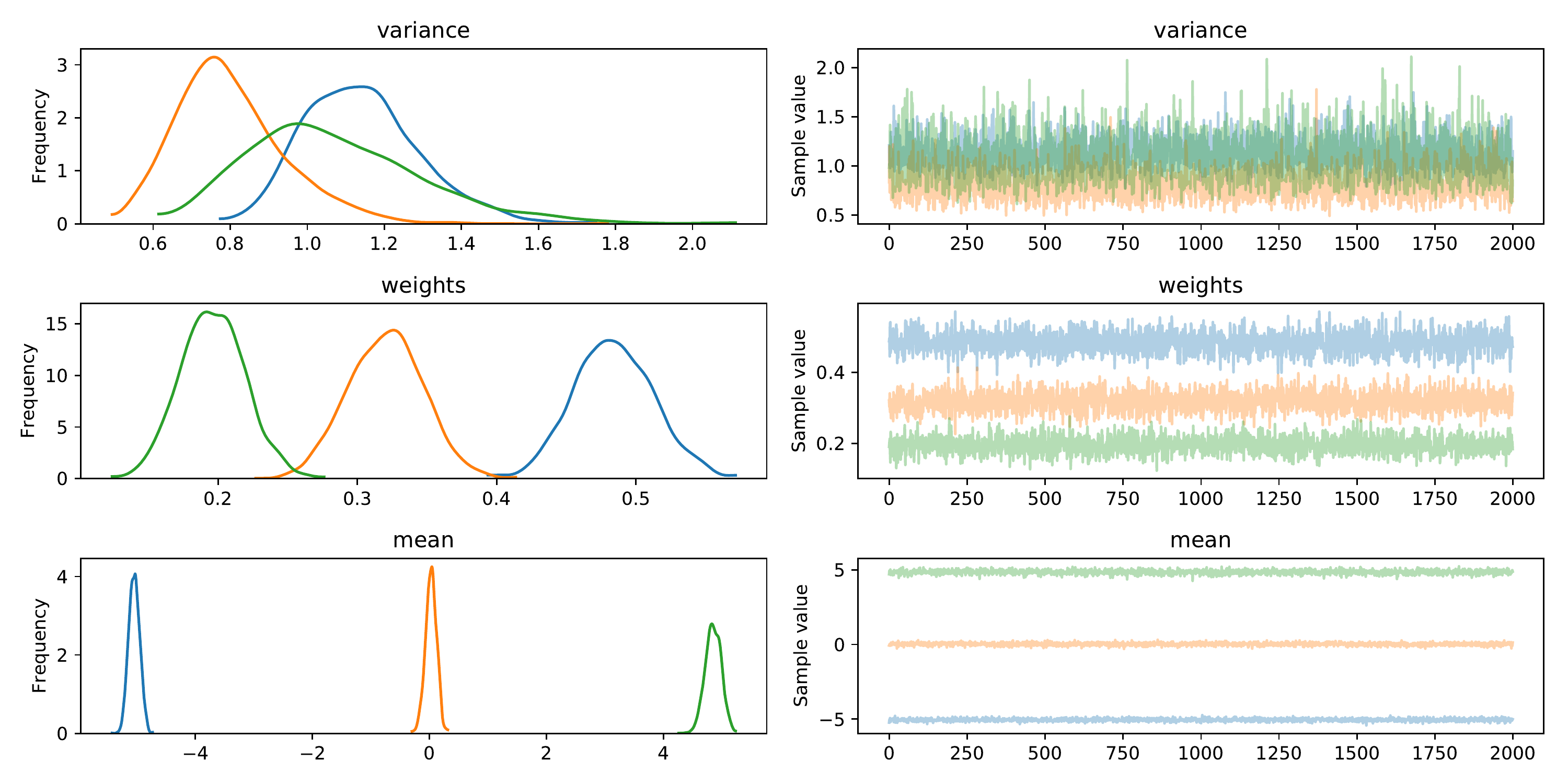}
    \qquad
\begin{tabular}[b]{cccccc}\hline
Sim 1 & \begin{tabular}[c]{@{}c@{}}NMI\\ (SE)\end{tabular}       & \begin{tabular}[c]{@{}c@{}}VI\\ (SE)\end{tabular}         & \begin{tabular}[c]{@{}c@{}}$\overline{K}$\\ (SE)\end{tabular} & \begin{tabular}[c]{@{}c@{}}$\overline{\alpha}$\\ (SE)\end{tabular} & $\overline{\pi}_{-}$ \\ \hline
$K=3$ & \begin{tabular}[c]{@{}c@{}}0.931\\ (8.5e-5)\end{tabular} & \begin{tabular}[c]{@{}c@{}}0.203\\ (2.5e-4)\end{tabular}  & \begin{tabular}[c]{@{}c@{}}3.0\\ (0.0)\end{tabular}           & \begin{tabular}[c]{@{}c@{}}1.84\\ (5.4e-3)\end{tabular}           & 0.0                  \\
$K=4$ & \begin{tabular}[c]{@{}c@{}}0.869\\ (2.0e-4)\end{tabular} & \begin{tabular}[c]{@{}c@{}}0.437\\ (7.7e-4)\end{tabular} & \begin{tabular}[c]{@{}c@{}}3.842\\ (2.6e-3)\end{tabular}      & \begin{tabular}[c]{@{}c@{}}1.43\\ (5.2e-3)\end{tabular}           & 0.08                 \\
$K=5$ & \begin{tabular}[c]{@{}c@{}}0.843\\ (2.7e-4)\end{tabular} & \begin{tabular}[c]{@{}c@{}}0.560\\ (11.1e-4)\end{tabular} & \begin{tabular}[c]{@{}c@{}}4.508\\ (3.2e-3)\end{tabular}      & \begin{tabular}[c]{@{}c@{}}1.01\\ (4.5e-3)\end{tabular}           & 0.12                 \\
$K=6$ & \begin{tabular}[c]{@{}c@{}}0.846\\ (3.3e-4)\end{tabular} & \begin{tabular}[c]{@{}c@{}}0.564\\ ( 14.9e-4)\end{tabular} & \begin{tabular}[c]{@{}c@{}}4.703\\ (5.2e-3)\end{tabular}      & \begin{tabular}[c]{@{}c@{}}0.618\\ (3.9e-3)\end{tabular}           & 0.11                
\label{table:hyperdirichlet_sim1_posterior_summary}
\end{tabular}
\captionlistentry[table]{A table beside a figure}
\captionsetup{labelformat=andtable}
\caption{\textbf{Left:} An example of traceplot of Gibbs sampling using hyperprior on Dirichlet distribution for variances, weights and means when $K=3$ in Sim 1 (Upper one: variance; Middle one: weights; Bottom one: mean). \textbf{Right:} Summary of posterior distribution in Sim 1: NMI is the normalized mutual information between true clustering and the resulting clustering. VI is the variation of information between true clustering and resulting clustering. SE is the standard error of mean.  $\overline{K}$ is the average occupied number of cluster, $\overline{\alpha}$ is the average $\alpha$ during sampling, $\overline{\pi}_{-}$ is the average of extra weight.}
\label{fig:hyperdirichlet_sim1_traceplot_and_table}
\end{figure}

\begin{figure}[!h]
\centering     
\subfigure[$K$=3]{\label{fig:sim1_alpha_k3}\includegraphics[width=0.34\textwidth]{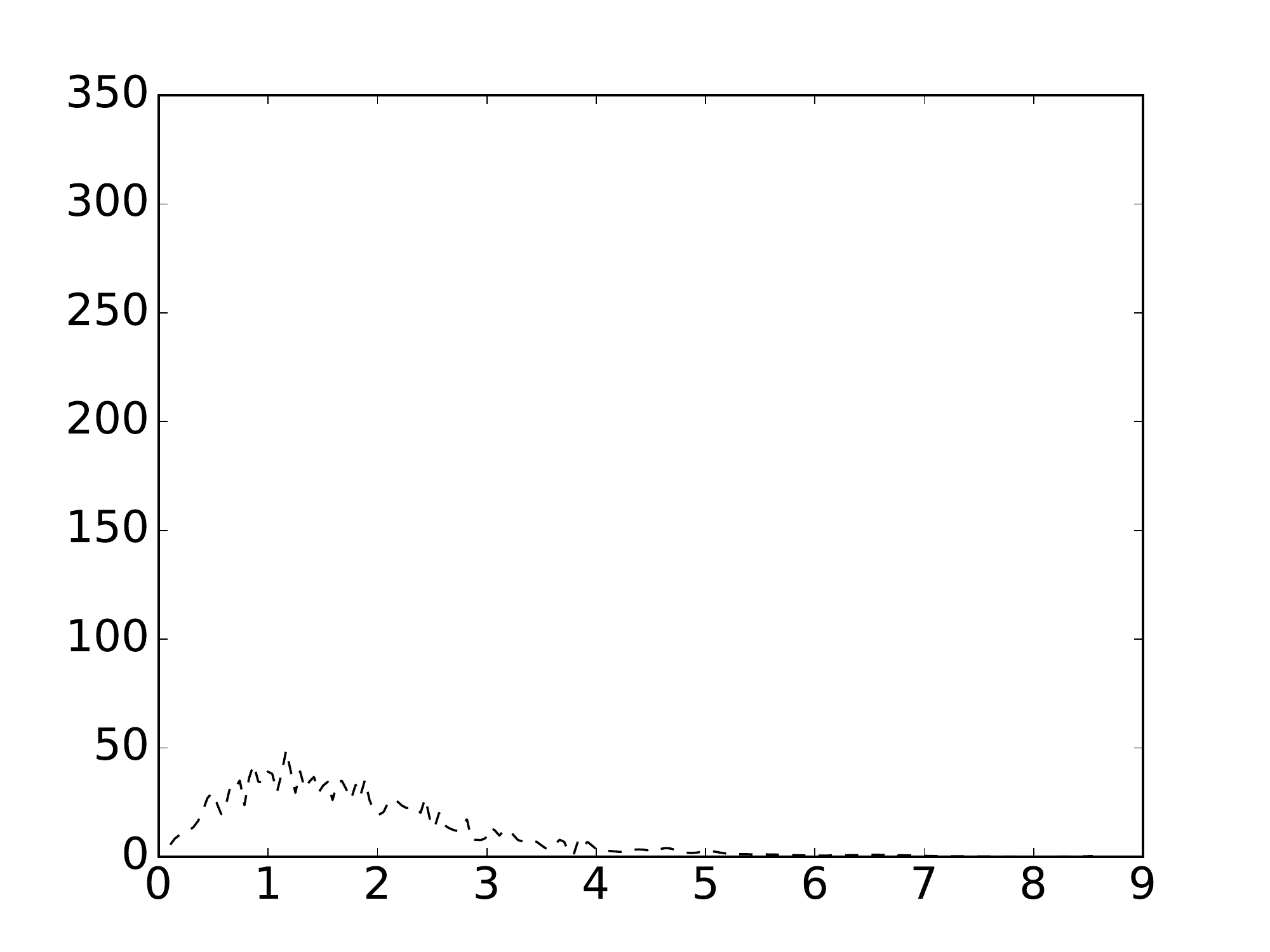}}
\subfigure[$K$=4]{\label{fig:sim1_alpha_k4}\includegraphics[width=0.34\textwidth]{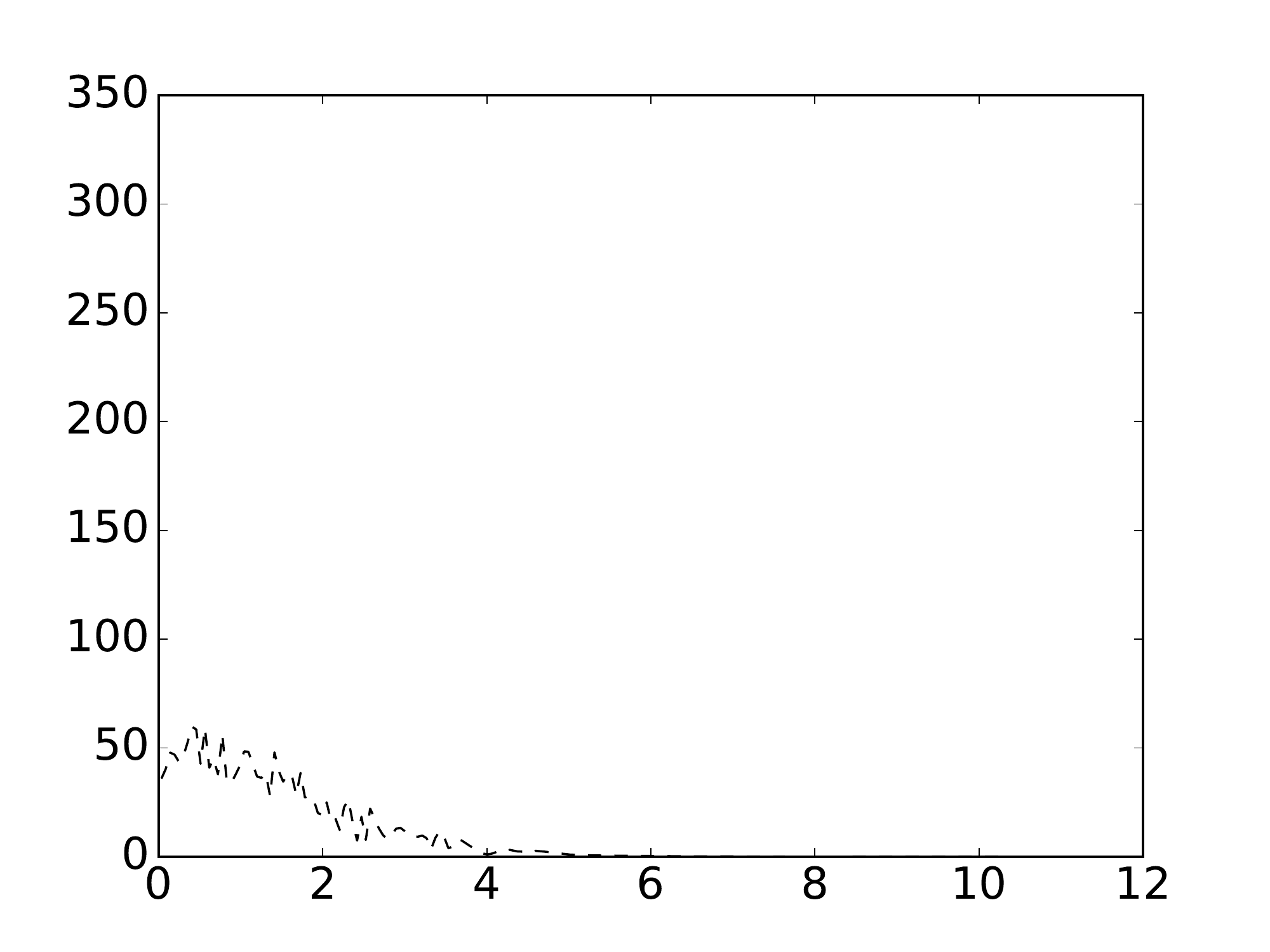}}
\subfigure[$K$=5]{\label{fig:sim1_alpha_k5}\includegraphics[width=0.34\textwidth]{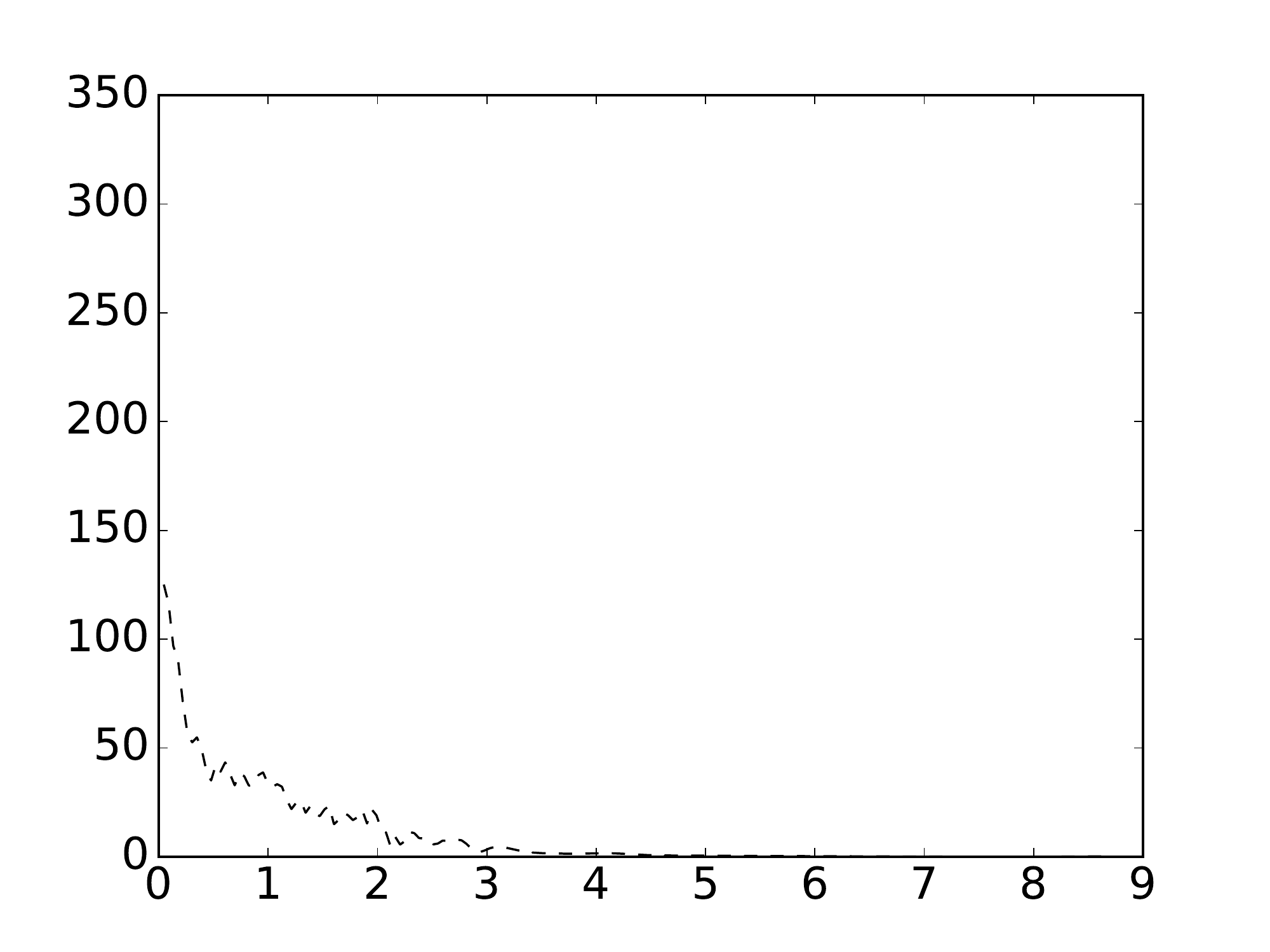}}
\subfigure[$K$=6]{\label{fig:sim1_alpha_k6}\includegraphics[width=0.34\textwidth]{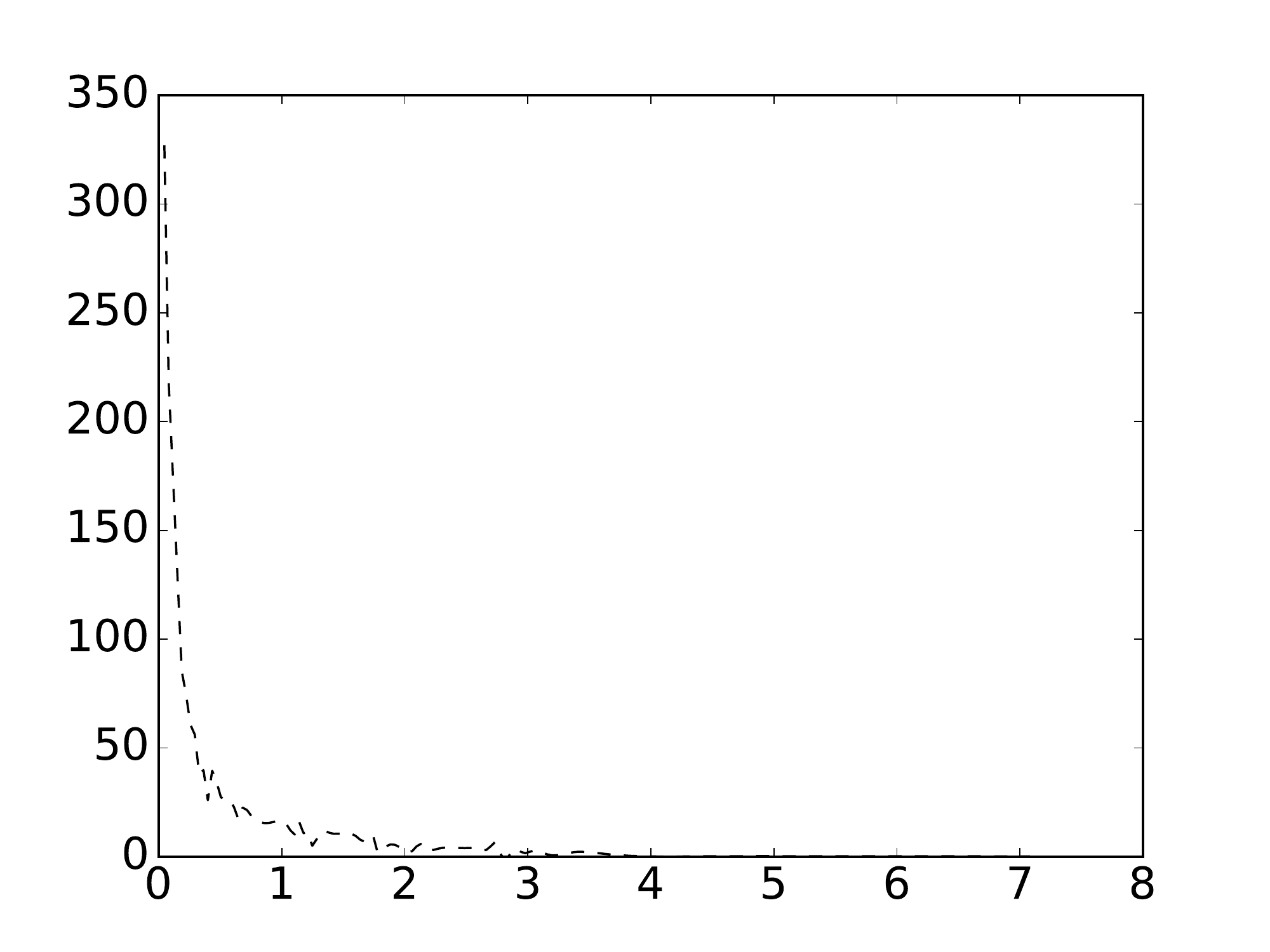}}
\caption{Posterior distribution for $\alpha$ in different overfitting settings in Sim 1.}
\label{fig:hyperdirichlet_sim1_alpha_several_k}
\end{figure}

\section{Conclusion}
We have proposed a new hyperprior on symmetric Dirichlet distribution in finite Bayesian mixture model. This hyperprior can learn the concentration parameter in Dirichlet prior due to over-fitting of the mixture model. The larger the overfitting (i.e. $K-K_0$ is larger, more overfitting), the smaller the concentration parameter. 

Although \citet{rousseau2011asymptotic} proved that $\overline{\alpha}$=max$(\alpha_k, k \leqslant K)<D/2$, the extra components are emptied at a rate of $N^{-1/2}$, it is still risky to use such small $\alpha$ in practice, for example, how much do we over-fit (i.e. how large the $K-K_0$). If $K-K_0$ is small, we will get very poor mixing from MCMC. Some efforts has been done further by \citet{van2015overfitting}. But simple hyperprior on Dirichlet distribution will somewhat release the burden.

\clearpage
\bibliography{bib}
\bibliographystyle{epflbibstyle}
\end{document}